\newcommand{\sep}{$\cdot\;$}
\newtheorem{theorem}{Theorem}
\begin{document}
\newcolumntype{C}[1]{>{\centering\arraybackslash}p{#1}}

\title[Parsing with Context Enhancement and Future Reward Reranking]{ Transition-based Parsing with Context Enhancement and Future Reward Reranking}
\author{Fugen Zhou}
\address[Fugen Zhou]{Beihang University, Beijing, China}

\author{Fuxiang Wu}
\address[Fuxiang Wu]{Beihang University, Beijing, China}
\email[Fuxiang Wu]{wfxownedu@gmail.com}

\author{Zhengchen Zhang}
\address[Zhengchen Zhang]{Institute for Infocomm Research ($I^{\textit{2}}R$), A*STAR, Singapore}

\author{Minghui Dong}
\address[Minghui Dong]{Institute for Infocomm Research ($I^{\textit{2}}R$), A*STAR, Singapore}

\maketitle

\renewcommand{\algorithmicrequire}{\textbf{Input:}}  
\renewcommand{\algorithmicensure}{\textbf{Output:}}

\begin{abstract}
%
This paper presents a novel reranking model, future reward reranking, to re-score the actions in a transition-based parser by using a global scorer. Different to conventional reranking parsing, the model searches for the best dependency tree in all feasible trees constraining by a sequence of actions to get the future reward of the sequence. The scorer is based on a first-order graph-based parser with bidirectional LSTM, which catches different parsing view compared with the transition-based parser.
Besides, since context enhancement has shown substantial improvement in the arc-stand transition-based parsing over the parsing accuracy, we implement context enhancement on an arc-eager transition-base parser with stack LSTMs, the dynamic oracle and dropout supporting and achieve further improvement. 
With the global scorer and context enhancement, the results show that UAS of the parser increases as much as 1.20\% for English and 1.66\% for Chinese, and LAS increases as much as 1.32\% for English and 1.63\% for Chinese. Moreover, we get state-of-the-art LASs, achieving 87.58\% for Chinese and 93.37\% for English. \\
\smallskip
\noindent \textbf{Keywords.} 
Global scorer \sep 
Future reward reranking \sep 
Context enhancement\sep  \\
Restricted graph-based parsing  \sep  
Arc-eager parsing
\end{abstract}

\section{Introduction}
\label{intro}
Recently, deep learning has attracted significant attention in the community and achieved extraordinary results in natural language processing, such as 
Part-of-Speech Tagging~\cite{POS2015}, Semantic Role Labeling~\cite{SRL2016,NSRL2016}, Sentiment Parsing~\cite{Chengsentiment2016}, Parsing~\cite{ConstituentParsing2016,DependencyParsing2015,Tsivtsivadze2009}, etc. In dependency parsing, neural networks automatically extract the features without manually feature engineering, and then they evaluate the score of a span (sub-tree) in graph-based model~\cite{McDonald2005,McDonald2005b} or an action in transition-based model~\cite{ChenDanqi2014,DyerChris2015} to build the best tree of a sentence. 

In graph-based parsing, a dependency tree is factored into spans, which are a small part of the tree corresponding to one or several dependency arcs, such as the arc between head and modifier for first order factoring. Given the scores of spans of a sentence, the parser searches for its dependency structure with the best score from all possible structures of the sentence. Because of huge searching space for a long sentence, it is time-consuming. However, because its searching method  ensures that the generating tree is globally  optimal, many works are based on it. Corro et al.~\cite{Corro2016} enforced bounded block degree and well-nestedness properties to dependency trees and employed integer linear program to fetch the best tree. Zhang et al.~\cite{ZhangParsing}  used a convolutional neural network to score the spans of a tree of a sentence and exploited conditional random field (CRF) to model the probability of the tree. Wang and Chang~\cite{WangParsing} utilized bidirectional long short-term memory (LSTM)~\cite{LSTM1997,LSTMForget} and sentence segment embedding to capture richer contextual information, and they archived competitive results with first-order factorization comparing to previous higher-order parsing.

Typical transition-based parsing is deterministic so that it is much faster than graph-based parsing. The transition-based parser contains two data structures: a buffer of unhandled words and a stack containing partial tree built by previous actions. The actions , such as shift, reduce, etc., are incrementally taken to create a dependency structure. Usually, they are selected by using a greedy strategy. Because of the  efficiency and high accuracy of transition-based parsing, it attracts many researchers. Dyer et al.~\cite{DependencyParsing2015} employed stack long short-term memory recurrent neural networks in transition-based parser and improved the accuracy. Andor et al.~\cite{GloballyParsing} investigated the label bias problem and proposed a globally normalized transition-based neural network model to avoid the problem. Bohnet et al.~\cite{BohnetParsing} introduced a generalized transition-based parsing framework which covered the two most common systems, namely the arc-eager system and the arc-standard systems~\cite{Nivre2008}.


Graph-based and transition-based parsers adopt very different parsing methods. Graph-based parsing uses an exhaustive searching algorithm, while transi\-tion-based parsing employs a greedy searching process. Thus, they have very different views of the parsing problem~\cite{Zhang2008}. Furthermore, the two types of parsers utilizing deep neural networks  also exist that various views due to the nature of  different parsing methods. Therefore, we propose a future reward reranking model, global scorer, to rerank the actions in a transition-based parser, which is based on a first-order graph-based parser. The scorer alleviates the defect error propagation of transition-based parser partly ascribed to the greedy strategy. Compared with the previous reranking parsers, our model uses the information of future reward, which is widely used in Q-learning, to rerank an action instead of the historical information. Besides, we further employ context enhancement introduced by Wu et al.~\cite{Wu2016} to improve the base  transition-based parser, and implement a new arc-eager transition-based parser with context enhancement, dynamic oracle~\cite{dynamicoracle2012} and dropout supporting.  The experimental results demonstrate that the two methods can effectively improve the parsing accuracy. Furthermore, integrating the two approaches gains more improvement. 
\section{Dependency Parsing}
\label{sec:1}
Given a sentence $x$, dependency parsing is to fetch the best tree structure $y$ from all feasible trees $Y$ of it, where each node is its word, and the edge between them describes their  head-modifier relationship (we will omit the dependency labels for simplifying).
In this section, we will describe two base parsers. One is an arc-eager transition-based parser for the base parser, which will include the global scorer and context enhancement to evaluate final results. The other is a graph-based parser with CRF which provides the trained model for the global scorer.

%
\subsection{Dependency Parsing with Deep Neural Networks}
The transition parser contains a buffer $B$, a stack $S$, a list $A$ of done actions, and searches for an optimal transition sequence $O(x)$, which can be mapped to a dependency tree by executing the action in it sequentially, of a sentence $x$. The actions vary in different transition systems, and we here use arc-eager transition system. Given a sentence $x$, the parser incrementally generates the transition sequence and stores in $A$. The $i^{th}$  state of the parser is defined as $\Lambda_i=(S_i,B_i,A_i,\Sigma_i)$, where $\Sigma_i$ is the partial tree built by $A_i$. Since the spurious ambiguity of the transition system, there are several transition sequences  produced the same tree so that the parser may generate a different sequence for a sentence $x$. Therefore, we denote $\Lambda_{-1}$ as a final state, and the following equations describe the begin and the end states,
\begin{eqnarray}
\begin{array}{lll}
\begin{cases}
\Lambda_0 &= (\emptyset, B_0 , \emptyset, \emptyset ) \\
\Lambda_{-1} &= ( \{Root\}, \emptyset, A_{-1},y ) \\
\end{cases}
\end{array}
\end{eqnarray}
where $Root$ is the dummy symbol representing the artificial root connecting to the real root of $x$; $B_0$ consists of words of $x$, $Root | (x_1, x_2,\cdots x_{|x|} )$, and $Root$ is  the first node in $B_0$;  $A_{-1}$ is $O(x)$; $|\ast|$ returns the number of elements in a set. 
The begin and the final states are connected by the following actions of the arc-eager transition system,
\begin{itemize}
\item \textit{Left\_Arc}: $({S}'|s, b|{B}',A,\Sigma) \implies ({S}', b|{B}', A \cup \{L_{Arc}\},\Sigma \cup \{b \rightarrow s\})$
\item \textit{Right\_Arc}: $({S}'|s, b|{B}',A,\Sigma) \implies ({S}'|s|b, {B}', A \cup \{R_{Arc}\},\Sigma \cup \{s \rightarrow b\} )$
\item \textit{Reduce}: $({S}'|s, B,A,\Sigma) \implies ({S}', B,A \cup \{R_D\},\Sigma)$
\item \textit{Shift}: $(S, b|{B}',A,\Sigma) \implies (S|b, {B}',A \cup \{S_F\},\Sigma)$
\end{itemize}
where the state $({S}'|s, b|{B}',A)$ indicates that $s$ is on top of the stack $S$ and $b$ is at the first position in the buffer $B$; $L_{Arc}$,$R_{Arc}$,$R_D$ and $S_F$ are short for $Left\_Arc$, $Right\_Arc$, $Reduce$ and $Shift$ respectively. The following formulates are the conditions for those actions,
\begin{eqnarray}
\begin{array}{rll}
\begin{cases}
L_{Arc}&: s \neq 0  \land \forall_k k \rightarrow s \notin \Sigma \\
R_{Arc}&: B \neq \emptyset \\
R_D    &: \exists k \rightarrow s \notin \Sigma \\
S_F    &: B \neq \emptyset
\end{cases}
\end{array}
\end{eqnarray}
Besides, according to the work of Goldberg et al.~\cite{dynamicoracle2012}, we adopt the dynamic oracle to train the system. Moreover, since the stack LSTMs proposed by Dyer et al.~\cite{DyerChris2015} can abstract the embedding of the parser effectively, we employ the stack LSTMs, specifically, $LSTM_B$,$LSTM_S$ and $LSTM_A$, to construct the embeddings of the buffer $B$, the stack $S$ and the list of actions $A$ respectively. Figure~\ref{fig_arceagerparser} depicts the structure of the arc-eager parser with  dynamic oracle and stack LSTMs,
\begin{figure}[t] 
\centering
\includegraphics[scale=0.7]{./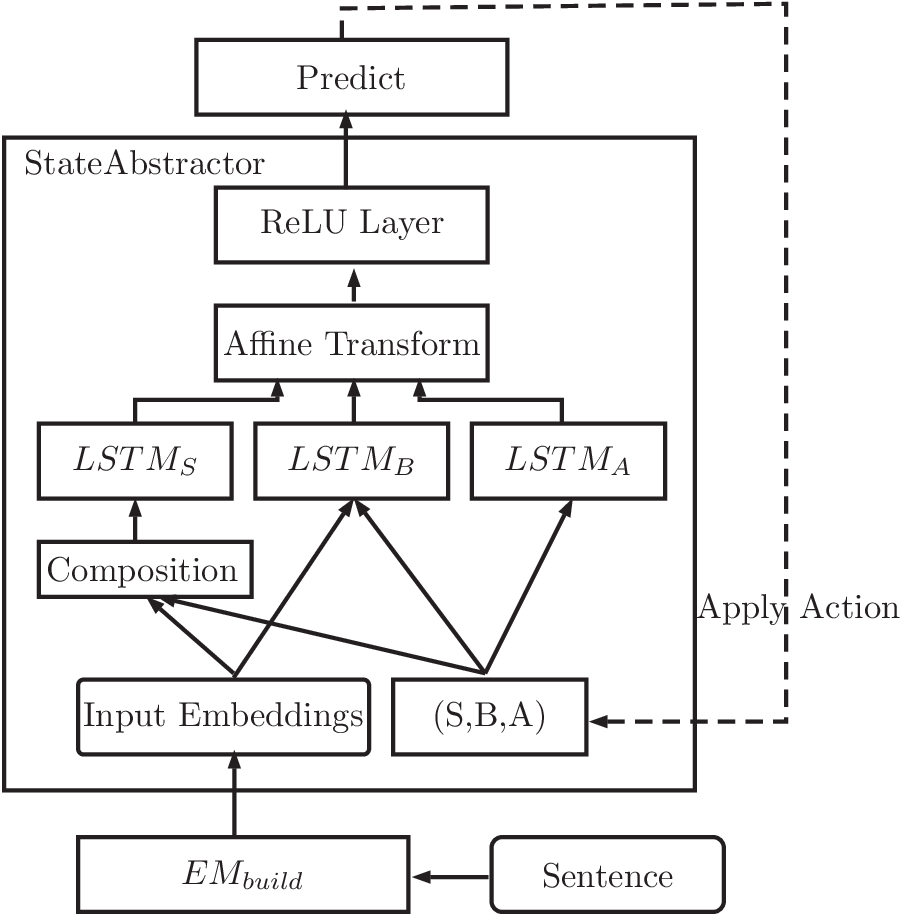}
\caption{The structure of the arc-eager parser with stack LSTMs \label{fig_arceagerparser}}
\end{figure}
%
where $StateAbstractor$ is a module to track the state changing of the parser when being applied an action. The detail definition of the stack LSTMs in the module  can be found in Dyer et al.~\cite{DyerChris2015}; $EM_{build}$ is an embedding generating function to make a representation for a word, 
\begin{equation}
EM_{build}(x_i,p_i) =ReLU( H \cdot \begin{bmatrix} E_w(x_i) \\ E_p(p_i) \end{bmatrix} + b_h )
\label{eq:EMBuild}
\end{equation}
where functions $E_w(x_i)$ and $E_p(p_i)$ return the embeddings of word $x_i$ and POS tag $p_i$ respectively; $H$ and $b_h$ are the weight matrix and the bias vector  respectively; ReLU refers to a  rectified linear unit.  $Predict$ is a softmax layer with affine transforming,
\begin{equation}
Predict(\mathcal{S}) = argmax_a( softmax(\mathcal{M} \cdot \mathcal{S}+\mathcal{B})[a])
\label{eq:Predict}
\end{equation}
where $\mathcal{S} \in R^N$ is the embedding generated by $StateAbstractor$; $\mathcal{M} \in R^{L \times N}$ and $\mathcal{B} \in R^N$ are the weight matrix and the bias vector respectively; $N$ and $L$ are the dimension of state embedding and the number of labels respectively. 

%
For the graph-based parser, we utilize the neural network model proposed by Wang and Chang~\cite{WangParsing} as the scorer because of bidirectional LSTM (BLSTM) efficiently capturing  richer contextual information. The parser is first order factorization and decodes with the Eisner algorithm. This algorithm introduces complete spans and incomplete spans, which are interrelated programming structures, and Fig.~\ref{fig_derivationspans} shows their derivation of the first-order parser.
\begin{figure}[t] 
\centering
\includegraphics[scale=0.98]{./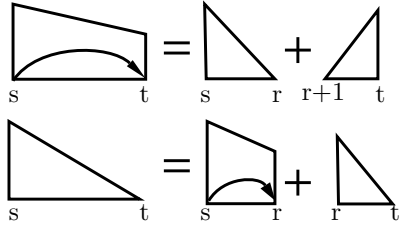}
\caption{The derivation of the first-order parser. The symmetric versions are omitted for simplifying. \label{fig_derivationspans}}
\end{figure}
Besides, the decoding is implemented as a bottom-up chart algorithm. The algorithm~\ref{alg:SpanOptimizing} generates the best sub-spans for a span,
\begin{algorithm}
\caption{ Optimizing the spans of the parser}             
\label{alg:SpanOptimizing}
\begin{algorithmic}[1]
\State $\forall i \; C_{i,i} = 0$
\For{$w \; \leftarrow \; 1$ \textbf{to} $|x|-1$ }
  \For{$i \; \leftarrow \; 1$ \textbf{to} $|x|-w$ }
     \State $j = i + w$
     \State $I_{i,j}=max_{i\leq r < j}\{C_{i,r}+C_{j,r+1} \} + f(x,i,j)$
     \State $Bp_{i,j}=argmax_{i\leq r < j}\{C_{i,r}+C_{j,r+1} \} + f(x,i,j)$
     \State $I_{j,i}=max_{i\leq r < j}\{C_{j,r+1}+C_{i,r} \} + f(x,j,i)$
     \State $Bp_{j,i}=argmax_{i\leq r < j}\{C_{j,r+1}+C_{i,r} \} + f(x,j,i)$
     \State $C_{i,j}=max_{i< r \leq j}\{I_{i,r}+C_{r,j} \}$
     \State $C_{i,j}=max_{i\leq r < j}\{I_{j,r}+C_{r,i} \}$
  \EndFor
\EndFor
%
%
\end{algorithmic}
\end{algorithm}
where $f(x,i,j)$ returns the score of the relation $i \rightarrow j$ of $x$; $I_{i,j}$, $C_{i,j}$, and $Bp_{i,j}$ denote the complete span, the incomplete span, and the back-tracking array respectively. Thus, the result can be built by back-tracking $Bp$ array from $I_{0,|x|-1}$ recursively. For the training procedure of the parser, we use a CRF because it can alleviate the label bias problem~\cite{GloballyParsing}.
\subsection{Context Enhancement}
As shown in Wu et al.~\cite{Wu2016}, it is beneficial to use additional transition systems to track other information, such as the previous word, of a word in a sentence. Given a sentence $x$, let the function $Em_i(x,y)$ denote the $i^{th}$ embedding extractor from $x$ and $y$. Figure~\ref{fig_EmbeddingEnhanceStruct} depicts the system structure,
\begin{figure}[t] 
\centering
\includegraphics[scale=0.7]{./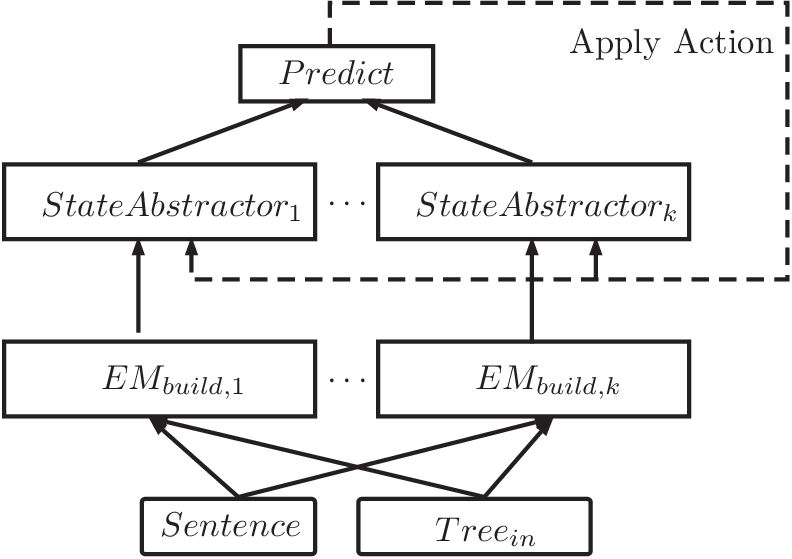}
\caption{The system structure of the parser with Context Enhancement. \label{fig_EmbeddingEnhanceStruct}}
\end{figure}
where $Tree_{in}$ is the dependency tree predicted by a baseline; $Predict$ takes the outputs $\{\mathcal{S}_i\}_i$ of K $StateAbstractor$ as inputs to predict the best action as follows,
\begin{equation}
Predict(\mathcal{S}_1,\mathcal{S}_2, \cdots,\mathcal{S}_K) = argmax_a(  Prob(\mathcal{S}_1,\mathcal{S}_2, \cdots,\mathcal{S}_K)[a])
\label{eq:Predict2}
\end{equation}
where the function $Prob(\mathcal{S}_1,\mathcal{S}_2, \cdots,\mathcal{S}_K)$ calculates a vector, in which the $i^{th}$ element is the probability of the action $a_i$,
\begin{equation}
Prob(\mathcal{S}_1,\mathcal{S}_2, \cdots,\mathcal{S}_K) = softmax(\tilde{\mathcal{M}} \cdot \begin{bmatrix}\mathcal{S}_1\\ \mathcal{S}_2 \\ \vdots \\ \mathcal{S}_K \end{bmatrix} +\tilde{\mathcal{B}}))
\label{eq:ProbPredict2}
\end{equation}
where $\tilde{\mathcal{M}}$ and $\tilde{\mathcal{B}}$ are the weight matrix and the bias vector respectively; $EM_{build,i}$ is the $i^{th}$ embedding generating function. Given a sentence $x$, the following equations define three base embedding generating functions,
\begin{eqnarray}
\begin{array}{lll}
\begin{cases}
EM_{build,0}(i,x) = EM_{build}(x_i,p_i) \\
EM_{build,1}(i,x) = EM_{build}(x_{i-1},p_{i-1}) \\
EM_{build,2}(i,x) = EM_{build,0}(i,x) + EM_{build,1}(i,x) \\
\end{cases}
\end{array}
\label{eq:baseEmGens}
\end{eqnarray}
Unlike the work proposed by Wu et al.~\cite{Wu2016}, we add two embeddings instead of concatenating in the third equation of Eq.~\ref{eq:baseEmGens}. Moreover, the transition system here is the arc-eager system and trained with the dynamic oracle and dropout supporting, which improves the parsing accuracy significantly.
\section{Transition Parsing with Future Reward Reranking}
The transition-based parser adopts a greedy strategy to build the tree incrementally, and each action is selected based on the current state, which contains the history information. Thus, it is useful to include the future information in the action selection, which is similar to \textbf{heuristic search}. Because of the different view of the graph-based model, we adopt the first-order graph-based model to make \textbf{global scorer} to provide the future reward information. 
\subsection{Constraints of the Actions}
For a sentence $x$, the transition-based parser generates a sequence $O(x)$ of actions which can be transformed into a dependency tree. Each action in $O(x)$ imposes constraints to the searching space of feasible trees of $x$, and all actions induce to a tree. Given an action $\mathit{ac}$ on a state $\Lambda_i$, the following list describes its constraints to current searching space,
\begin{enumerate}
\item $L_{Arc}$: The action adds an arc $b \rightarrow s$ and pop $s$ from $S_i$, which indicates that the trees after applying it must have the arc $b \rightarrow s$ and exclude any arcs $\{t \rightarrow s\}_{t \in  (B_i-b)} \cup \{s \rightarrow t\}_{t \in  B_i}$.
\item $R_{Arc}$: It adds  an arc $\{s \rightarrow b\}$ and pushes $b$ onto $S_i$. The trees after that must have the arc $\{s \rightarrow b\}$, and exclude any arcs $\{t \rightarrow b\}_{t \in  (B_i \cup S_i-s)} \cup \{b \rightarrow t\}_{t \in  S_i}$.   
\item $R_D$: It pop $s$ from $S_i$ which let the trees exclude any arcs $\{t \rightarrow s\}_{t \in B_i}$.
\item $S_F$: It pushes $b$ onto $S_i$, which means that the trees exclude any arcs $\{t \rightarrow b\}_{t \in  B_i \cup S_i}$.
\end{enumerate}
where $-$ is excluding operator, for example, $B_i-b$ contains all elements of $B_i$ except $b$.  Let $RA(\Lambda_i)$ and $FA(\Lambda_i)$ denote functions which return the required set of arcs and the forbidden set of arcs at the $i^{th}$ parsing step after applying $i$ actions, the sets can be induced by the constraints of actions of  $O(x)$ recursively. Given the $i^{th}$ state $\Lambda_i$, the following equations depict the functions for each action,
\begin{eqnarray}
\begin{array}{lll}
\begin{cases}
RA(\Lambda_i,L_{Arc}) = \{b \rightarrow s\} \cup \Sigma_i \\
FA(\Lambda_i,L_{Arc}) =  \{t \rightarrow s\}_{t \in  (B_i-b)}  \cup \{s \rightarrow t\}_{t \in  B_i} \cup \\
 \qquad \qquad  E_{\Lambda}(S_i,RA(\Lambda_i,L_{Arc}))  \\
RA(\Lambda_i,R_{Arc}) = \{s \rightarrow b\} \cup \Sigma_i \\
FA(\Lambda_i,R_{Arc}) =  \{t \rightarrow b\}_{t \in  (B_i \cup S_i-s)} \cup \{b \rightarrow t\}_{t \in  S_i} \cup \\
 \qquad \qquad  E_{\Lambda}(S_i,RA(\Lambda_i,R_{Arc}))  \\
RA(\Lambda_i,R_D) =  \Sigma_i \\
FA(\Lambda_i,R_D) =  \{t \rightarrow s \}_{t \in B_i} \cup E_{\Lambda}(S_i,RA(\Lambda_i,R_D)) \\
RA(\Lambda_i,S_F) =  \Sigma_i \\
FA(\Lambda_i,S_F) =  \{t \rightarrow b\}_{t \in  B_i \cup S_i} \cup  E_{\Lambda}(S_i,RA(\Lambda_i,S_F))  \\
\end{cases}
\end{array}
\label{eq:constrainsEQU}
\end{eqnarray}
where Function $E_{\Lambda}(S,ra)$ is defined as follows, 
\begin{equation}
 E_{\Lambda}(S,ra)= \{t \rightarrow u, u \rightarrow t\}_{t,u \in S, t\neq u} \cup Rev(ra)) - ra,
\label{eq:ExcludeStack}
\end{equation}
where $Rev(ra)$ generate a set by exchanging the head and the modifier of an arc in $ra$. The following proof demonstrates the correctness of Eq.~\ref{eq:constrainsEQU},
\begin{theorem}[Correctness of Constraints]
Given the $i^{th}$ state $\Lambda_i$ of a sentence $x$, the required set $RA(\Lambda_i)$ induced by the applied actions $\{a_j\}_{0<j \leq i} \subset O(x)$ is  $\Sigma_i$, and the forbidden set $FA(\Lambda_i)$ is $E_{\Lambda}(S_i,\Sigma_i)$. The feasible trees at the $i^{th}$ state are any trees which contain the arcs in  $RA(\Lambda_i)$ and do not include any arc in  $FA(\Lambda_i)$. Besides, after applying an action $\mathit{a}$, the feasible trees satisfy the Eq.~\ref{eq:constrainsEQU}, where the arc of $RA(\Lambda_i,a)$ must exist and the arc of $FA(\Lambda_i,a)$ must be excluded. 
\end{theorem}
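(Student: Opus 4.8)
The plan is to argue by induction on the number $i$ of actions already applied to $\Lambda_0$, proving the three assertions simultaneously: $RA(\Lambda_i)=\Sigma_i$, $FA(\Lambda_i)=E_\Lambda(S_i,\Sigma_i)$, and that a (projective) tree $y\in Y$ is feasible at $\Lambda_i$ — i.e.\ completable from the current action prefix — exactly when $\Sigma_i\subseteq y$ and $y\cap E_\Lambda(S_i,\Sigma_i)=\emptyset$. The base case $i=0$ is immediate: $S_0=\Sigma_0=\emptyset$, hence $RA(\Lambda_0)=\emptyset$ and $FA(\Lambda_0)=E_\Lambda(\emptyset,\emptyset)=\emptyset$, and every tree of $x$ is trivially feasible.

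Before the inductive step I would record two structural invariants of the arc-eager system that the argument leans on. First, any node popped from the stack (through $L_{Arc}$ or $R_D$) is \emph{finalized}: it already carries its head in $\Sigma_i$ and acquires no further incident arc in any completion. Second, inside $\Sigma_i$ the only arcs joining two stack elements run between stack-adjacent nodes, the lower being the head of the higher (precisely the arcs created by past $R_{Arc}$ moves not yet reduced); in particular for non-adjacent $t,u\in S_i$ neither $t\to u$ nor $u\to t$ lies in $\Sigma_i$. These two facts are what make the compact set $E_\Lambda(S_i,\Sigma_i)$ — the full clique of arcs among current stack nodes, minus the spine arcs already in $\Sigma_i$, together with the reverses of $\Sigma_i$ — capture exactly the arcs that tree-ness together with the derivation prefix rule out.

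The inductive step is a four-way case split on the action $a$ applied to $\Lambda_i$, using the per-action constraint list (items 1--4) and Eq.~\ref{eq:constrainsEQU}. For each action I would (a) check that $RA$ updates correctly: $L_{Arc}$ and $R_{Arc}$ add their single new arc, $R_D$ and $S_F$ add none, matching $RA(\Lambda_i,a)=\Sigma_i\cup\{\text{new arc}\}=\Sigma_{i+1}$; (b) check feasibility: every completion of the new prefix respects the freshly imposed required and forbidden arcs (soundness), and conversely any tree meeting the updated constraints is still completable (completeness), the latter by observing that the remaining buffer/stack configuration is exactly that of a fresh arc-eager run on the unfinished part; and (c) verify the set identity $FA(\Lambda_i,a)=E_\Lambda(S_{i+1},\Sigma_{i+1})$ by combining $FA(\Lambda_i)=E_\Lambda(S_i,\Sigma_i)$ from the hypothesis with the newly forbidden arcs. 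Here the pushing actions ($R_{Arc}$, $S_F$) enlarge the stack clique by all arcs between the newcomer and every old stack node (minus the new spine arc in the $R_{Arc}$ case), while the popping actions ($L_{Arc}$, $R_D$) shrink it; one then shows the explicitly listed forbidden arcs that involve the finalized node are absorbed — either into $Rev(\Sigma_{i+1})$ once that node carries a head, or into the surviving part of the clique.

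The main obstacle is exactly step (c) for the popping actions: after $s$ leaves the stack, the per-action list forbids $\{t\to s\}$ and $\{s\to t\}$ for still-active $t$, and one must show this does not make $FA(\Lambda_{i+1})$ cut out fewer trees than $E_\Lambda(S_{i+1},\Sigma_{i+1})$ does. This forces the argument that each such arc is already implied: arcs \emph{into} the finalized $s$ clash with the head of $s$ now recorded in $\Sigma_{i+1}$, hence are excluded by tree-ness for every $y\supseteq\Sigma_{i+1}$, whereas arcs \emph{out of} $s$ must be killed via the feasibility characterization (no completion attaches a child below a finalized node), so that omitting them changes neither the feasible family nor, after intersecting with tree-ness and $\Sigma_{i+1}$, the induced constraint. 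Making this absorption airtight — i.e.\ proving that $E_\Lambda(S_i,\Sigma_{i+1})\cup\{t\to s\}_{t\in B_i-b}\cup\{s\to t\}_{t\in B_i}$ and $E_\Lambda(S_{i+1},\Sigma_{i+1})$ carve out the same set of trees of $x$ — is the delicate core; the push cases, the completeness direction, and the $RA$ bookkeeping are routine once the two structural invariants above are established.
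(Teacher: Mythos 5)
Your plan is correct and rests on the same two structural facts the paper invokes --- popped nodes are finalized and acquire no further incident arcs (by projectivity, such an arc would cross one in $\Sigma_i$), and the only arcs ever joining two co-resident stack nodes are the spine arcs already in $\Sigma_i$ --- but you package the argument quite differently. The paper gives a one-shot, static characterization of $RA(\Lambda_i)$ and $FA(\Lambda_i)$ directly at state $\Lambda_i$ and then dismisses the transition in a single closing sentence (``the two sets can be updated by including the corresponding sets induced by this action respectively''), whereas you set up an explicit induction on $i$ with a four-way case split on the action and insist on proving both soundness and completeness of the feasibility characterization at each step. The payoff of your route is exactly the step you flag as delicate: showing that the per-action forbidden arcs involving a popped node $s$ (the sets $\{t\rightarrow s\}_{t\in B_i-b}$ and $\{s\rightarrow t\}_{t\in B_i}$ listed in Eq.~\ref{eq:constrainsEQU}) are absorbed into $E_{\Lambda}(S_{i+1},\Sigma_{i+1})$ together with tree-ness and $\Sigma_{i+1}\subseteq y$. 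The paper never verifies this consistency between the per-action update and the closed-form $E_{\Lambda}(S_i,\Sigma_i)$, so your version closes a real gap; it would also surface the paper's slips in the final display (the $\cap\,Rev(\Sigma_i)$ should be the union-then-difference of Eq.~\ref{eq:ExcludeStack}, and ``the required set is $\Lambda_i$'' should read $\Sigma_i$). The cost is only length: your push cases and $RA$ bookkeeping are routine, as you say, and the completeness direction (any tree meeting the updated constraints remains derivable) is the one place where you should be explicit that you are appealing to the known completeness of arc-eager derivations for projective trees rather than re-deriving it.
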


\begin{proof}
After using the first $i$ actions in $O(x)$, $\Sigma_i$ contains the arcs built previously. For the words popping from the stack, there is no any word in  the stack or buffer can be its child because of the definition of the transition system and the assumption only handling projective dependency tree. Otherwise, this arc will cross with an arc in $\Sigma_i$. Thus, we simply consider the words in the stack and the buffer and ignore the words popping from the stack. In the following steps, there is no action to make an additional arc between two different items in $S_i$, and no arc is in $\{t \rightarrow u, u \rightarrow t\}_{t,u \in S, t\neq u} - \Sigma_i$. Besides, there is no constraint induced by the words in the buffer since they are untouched so that $B_i$ can be ignored safely. For the arc $e$ in $\Sigma_i$, all feasible trees should contain it and exclude the corresponding reversal arc, namely the arcs in $Rev(\Sigma_i)$. Therefore, the forbidden set is, 
\begin{equation}
(\{t \rightarrow u, u \rightarrow t\}_{t,u \in S, t\neq u} - \Sigma_i) \cap Rev(\Sigma_i) = E_{\Lambda}(S_i,\Sigma_i)
\end{equation}
and the required set is $\Lambda_i$. When applying the $i+1^{th}$ action, the two sets can be updated by including the corresponding sets induced by this action respectively. $\Box$
\end{proof}
\subsection{Global Scorer}
As described in Eq.~\ref{eq:constrainsEQU}, the searching spaces applied different action are different at the $i^{th}$ parsing step. Therefore, we can search for the best tree existed in each searching space and score them to prove the future reward of the corresponding action. 
Based on Algorithm~\ref{alg:SpanOptimizing}, we propose a restricted bottom-up chart algorithm to find the best tree and calculate its score. Given a required set $RA$ and a forbidden set $FA$, a penalty score is defined as follows, 
\begin{eqnarray}
\begin{array}{lll}
f_{pen}(RA,FA,h,d,t) = 
\begin{cases}
-\infty \qquad \{ h\rightarrow d \in FA \land  t = T_{INCOM} \} \lor\\
\qquad\qquad  \{ \exists ( k\rightarrow d) \in RA \land k\neq h  \\
\qquad\qquad \land t = T_{INCOM} \} \\
0 \qquad other
\end{cases}
\end{array}
\label{eq:penaltyFUN}
\end{eqnarray}
where $t$ indicates the type of a span and has two type, namely, $T_{INCOM}$ and $T_{COM}$; $h$ and $d$ are the indexes of the start point and end point respectively. By using $f_{pen}(RA,FA,h,d,t)$, Algorithm~\ref{alg:ConstraintSpanOptimizing} describes the restricted bottom-up chart algorithm, and we  denote the algorithm as function $Constraint\-Prob(RA,FA,f,x)$.
\begin{algorithm}[t]
\caption{ Optimizing the spans of the parser with constraints}             
\label{alg:ConstraintSpanOptimizing}
\begin{algorithmic}[1]
\State $\forall i \; C_{i,i} = 0$
\For{$w \; \leftarrow \; 1$ \textbf{to} $|x|-1$ }
  \For{$i \; \leftarrow \; 1$ \textbf{to} $|x|-w$ }
     \State $j = i + w$
     \State $ I_{i,j}=max_{i\leq r < j}\{C_{i,r}+C_{j,r+1}$  \label{alg:ConstraintSpanOptimizing:Iij}
	 \State  \quad\quad	$+ f_{pen}(RA,FA,i,r,T_{COM})+f_{pen}(RA,FA,j,r+1,T_{COM}) \} $
	 \State  \quad\quad	$+ f(x,i,j)+f_{pen}(RA,FA,i,j,T_{INCOM})$
	 \State $ Bp_{i,j}=argmax_{i\leq r < j}\{C_{i,r}+C_{j,r+1}$ 
	 \State  \quad\quad	$+ f_{pen}(RA,FA,i,r,T_{COM})+f_{pen}(RA,FA,j,r+1,T_{COM}) \} $
	 \State  \quad\quad	$+ f(x,i,j)+f_{pen}(RA,FA,i,j,T_{INCOM})$
     \State $ I_{j,i}=max_{i\leq r < j}\{C_{j,r+1}+C_{i,r} $  \label{alg:ConstraintSpanOptimizing:Iji}
	 \State  \quad\quad	$+ f_{pen}(RA,FA,j,r+1,T_{COM})+f_{pen}(RA,FA,i,r,T_{COM}) \} $
	 \State  \quad\quad	$+ f(x,j,i)+f_{pen}(RA,FA,j,i,T_{INCOM})$
     \State $ Bp_{j,i}=argmax_{i\leq r < j}\{C_{j,r+1}+C_{i,r} $ 
	 \State  \quad\quad	$+ f_{pen}(RA,FA,j,r+1,T_{COM})+f_{pen}(RA,FA,i,r,T_{COM}) \} $
	 \State  \quad\quad	$+ f(x,j,i)+f_{pen}(RA,FA,j,i,T_{INCOM})$
     \State $ C_{i,j}==max_{i< r \leq j}\{I_{i,r}+C_{r,j}$ 
	 \State  \quad\quad	$+ f_{pen}(RA,FA,i,r,T_{INCOM})+f_{pen}(RA,FA,r,j,T_{COM}) \} $
	 %
	 %
     \State $ C_{j,i}=max_{i\leq r < j}\{I_{j,r}+C_{r,i}  $ 
	 \State  \quad\quad	$+ f_{pen}(RA,FA,j,r,T_{INCOM})+f_{pen}(RA,FA,r,i,T_{COM}) \} $
  \EndFor
\EndFor
\State \Return  $C_{0,|x|-1}$              
\end{algorithmic}
\end{algorithm}
%
\\\noindent\textbf{Correctness}:  On the one hand, let us assume that an arc $j \rightarrow i \in RA$  do not exist in the output tree, which indicates that an arc $k \rightarrow i \land k \neq j$ exists in the tree. Since $f_{pen}(RA,FA,k,i,T_{INCOM})=-\infty$, the score of  the output tree is $-\infty$. However, as $FA$ and $RA$ are generated from the state of the parser, there is a projective tree satisfied them with a score larger than $-\infty$. It conflicts with the assumption. On the other hand, if the output tree contains an arc $j \rightarrow i \in FA$, the score of it is $-\infty$ because of $f_{pen}(RA,FA,j,i,T_{INCOM})=-\infty$. However, this will induce a contradiction as well. $\Box$
\subsection{Integrating Transition-based Parser with Global Scorer}
With the global scorer, we use the popular mixture strategy , and the following formulate~\ref{eq:Predict3} integrates the scorer with Eq.~\ref{eq:Predict2},
\begin{eqnarray}
\begin{array}{lll}
Predict(\mathcal{S}_1,\mathcal{S}_2, \cdots,\mathcal{S}_K) = &argmax_a( (\beta \cdot Prob(\mathcal{S}_1,\mathcal{S}_2, \cdots,\mathcal{S}_K)+\\
&(1-\beta)\cdot P_{constraints} )[a])
\label{eq:Predict3}
\end{array}
\end{eqnarray}
where $\beta$ is a parameter; $P_{constraints}$ is a vector, where the $j^{th}$ element is computed as $ConstraintProb(RA(\Lambda_i,a_j),FA(\Lambda_i,a_j),f,x)$ at the $i^{th}$ step. Therefore, Algorithm~\ref{alg:IntegratingParser} depicts the overall decoding process. 
\begin{algorithm}[t]
\caption{ Integrating Transition-based Parser with Global Scorer}    
\label{alg:IntegratingParser}
\begin{algorithmic}[1]
\State Evaluating the score of each feasible arc of $x$ with BLSTM neural network model to suppose $f(x,j,i)$
\For{$i \; \leftarrow \; 0$ \textbf{to} $2\cdot|x|-1$ }
	\State Evaluating the probabilities $Prob(*)$ with Eq.~\ref{eq:Predict} or Eq.~\ref{eq:Predict2}
	\For{$a \in Act$ }
		\State Generating the constraint sets for all actions with Eq~\ref{eq:constrainsEQU}
		\State $P_{constraints}[a]=ConstraintProb(RA(\Lambda_i,a),FA(\Lambda_i,a),f,x)$
	\EndFor
	\State Calculating Eq~\ref{eq:Predict3} and obtaining a best action $a_{best}$
	\State Applying $a_{best}$ to all $StateAbstractor$
\EndFor  
\end{algorithmic}
\end{algorithm}

\section{Experiments}
\begin{table}
\begin{center}
\caption{The configuration of the parsers. } 
{\footnotesize
\begin{tabular}{r c c c c}
  \hline
                 		& $P_{Base}$	& $P_{CH}$ 	&	$P_{Base,Global}$ 	&	$P_{CH,Global}$ 		\\
  \hline
   $Base\_Structure$		&   \checkmark 		&  \checkmark	&	\checkmark		&	\checkmark			\\
  \hline
  $ Context\_Enhancement$ 	&   				&  \checkmark	&					&	\checkmark		\\
  \hline
  $ Global\_Scorer$ 		&  					& 				&	\checkmark		&	\checkmark	\\
  \hline
\end{tabular}}
\label{tb.configuration_parser}
\end{center}
\end{table}
In this paper, We conduct experiments on four parsers as shown in Table~\ref{tb.configuration_parser}, where $Base\_Structure$ represents the arc-eager transition-based parser with dynamic oracle and stack LSTMs; $Context\_Enhancement$ states for the parser with three base embedding generating functions depicted in Eq~\ref{eq:baseEmGens}; $Global\_Scorer$ indicates that the parser is integrated with the global scorer. 
We also report the scores of the underlying parser $P_{Scorer}$ for the global scorer. 
Besides, all experiments are evaluated with unlabeled attachment score (UAS), the percentage of words with the correct head, and labeled attachment score (LAS), the percentage of words with the correct head and label.
\subsection{Datasets}
\begin{table}
\begin{center}
\caption{The data split for training, testing and development. } 
{\footnotesize
\begin{tabular}{c c c c}
  \hline
 & \textbf{Training} &	\textbf{Testing} &	\textbf{development} \\
  \hline
  PTB  &	2-21  &	22  &	23 \\
 CTB  & 001-815, 1001-1136 & 816-885, 1137-1147 & 886-931, 1148- 1151\\
 \hline
\end{tabular}}
\label{tb.datasplit}
\end{center}
\end{table}
The parsers are compared  on the English Penn Treebank (PTB) and Chinese Treebank (CTB)~\cite{XueCTB2005} version 5 with the standard splits of them as shown in Table~\ref{tb.datasplit}. Because the parsers are based on the arc-eager transition system, we only consider the projective tree. Thus, for English, we use an open-source conversion utility Penn2Malt\footnote{http://stp.lingfil.uu.se/~nivre/research/Penn2Malt.html} with  head rules provided by Yamada and Matsumoto to convert phrase structures to dependency structures. The POS-tags are predicted by the Stanford POS tagger~\cite{Toutanova2003} with ten-way jackknifing of the training data~\cite{DependencyParsing2015,ChenDanqi2014}($accuracy \approx 97.2\%$). For Chinese, we utilize Penn2Malt with   head rules compiled by Zhang and Clark~\cite{Zhang2008} to obtain dependency structures and use their gold-standard segmentation and POS-tags to train and test.
Besides, the pre-trained word embeddings for English are the same as Dyer et al.~\cite{DependencyParsing2015} \footnote{https://github.com/clab/lstm-parser}, while the embeddings for Chinese is generated by word2vec\footnote{https://code.google.com/p/word2vec/} with the Daily Xinhua News Agency part of the Chinese Gigaword Fifth Edition (LDC2011T13), which is segmented by Stanford Word Segmenter~\cite{segmenter2005}.
\subsection{Results}
\begin{table}
\begin{center}
\caption{The results of the parsers. The different systems are taken from: ZM2014(Zhang and McDonald~\cite{ZhangMC2014}); Dyer2015(Dyer et al.~\cite{DependencyParsing2015}); Zhang2016(Zhang et al.~\cite{ZhangParsing});  Wang2016(Wang et al.~\cite{WangParsing}); Kiperwasser2016(Kiperwasser et al.~\cite{Kiperwasser2016}); Wu2016(Wu et al.~\cite{Wu2016}); Cheng2016(Cheng et al.~\cite{AttentionParsing}); Sheng2014(Sheng et al.~\cite{Shen2014}); LZ2014 (Le and Zuidema~\cite{Le2014}); Zhu2015(Zhu et al.~\cite{zhu2015});   Zhou2016 (Zhou et al.~\cite{Zhou2016})  } 
{\footnotesize
\begin{tabular}{r  c c c c c c c}
  \hline
\multirow{2}{*}{} &  \multicolumn{2}{c}{PTB-YM} & \multicolumn{2}{c}{CTB} & Parsing   & \multirow{2}{*}{Complexity}\\
                  	&   UAS       & LAS       & UAS        & LAS   		  &(sec/sent) &  \\
  \hline
  $P_{Scorer}$ 	  	& 93.05		& N/A			& 87.73	 	& N/A  			& 0.011	 & $\mathcal{O}(n^3)$	\\
  $P_{Base}$ 	  	& 93.13		& 92.05			& 87.23	 	& 85.95  		& 0.004	 & $\mathcal{O}(n)$	\\
  $P_{CH}$ 	  	  	& 93.58		& 92.64			& 87.82		& 86.54  		& 0.012	 & $\mathcal{O}(n)$  \\
  $P_{Base,Global}$	& 93.95		& 92.84 		& 88.67 	& 87.27 		& 0.222	 & $\mathcal{O}(n^3)+$  \\
  $P_{CH,Global}$	& \textbf{94.33}		& \textbf{93.37}& 88.89		& \textbf{87.58}& 0.235	 & $\mathcal{O}(n^3)+$  \\
   \hline
  ZM2014	  		& 93.57		&  92.48	& 87.96		 & 86.34 & N/A & $\mathcal{O}(n^3)+$	\\
  Dyer2015 			& N/A		&	N/A		& 87.2		 & 85.7		& 0.010 & $\mathcal{O}(n)$	\\
  Zhang2016		  	& 93.31		&  92.23	& 87.65		 & 86.17 	& N/A & $\mathcal{O}(n^3)+$\\
  Wang2016			& 93.51		&  92.45	& 87.55		 & 86.23	& 0.038 & $\mathcal{O}(n^3)$  \\
  Kiperwasser2016	& N/A		&	N/A		& 87.6		 & 86.1	& N/A & $\mathcal{O}(n^3)$	\\
  Wu2016		  	& N/A		&	N/A		& 87.33		 & 85.97 & N/A & $\mathcal{O}(n)$	\\
  Cheng2016 	 	& N/A		& N/A		& 88.1		 & 85.7  & N/A &  $\mathcal{O}(n^2)+$	\\
   \hline
  Sheng2014			  & 93.37			 &  N/A	 & \textbf{89.16}& N/A & 11.78 & $\mathcal{O}(n^4)$+		\\
  LZ2014			  & 93.12			 &  N/A	 & N/A			 & N/A &	N/A & $\mathcal{O}(n^3)$\\
  Zhu2015			  & 93.83   &  N/A	 & 85.7		 	 & N/A &	N/A	& $\mathcal{O}(n)+$\\
  Zhou2016			  & 93.61			 &  N/A	 & N/A		 	 & N/A &	0.062	& $\mathcal{O}(n)+$\\
  \hline
\end{tabular}}
\label{tb.results_parser}
\end{center}
\end{table}
The experimental results for English and Chinese are shown in Table~\ref{tb.results_parser}. The LASs of our parser $P_{CH,Global}$ are 93.37\% for PTB-YM and 87.58\% for  CTB5, which are higher than other parsers. UAS of $P_{CH,Global}$ is 88.89\% for CTB5, which is lower than that of Sheng2014. However, since their parser employed  the second order reranked model, its complexity is higher than $\mathcal{O}(n^4)$ and its parsing speed is lower than that of  $P_{CH,Global}$. For PTB-YM, UAS of $P_{CH,Global}$ is 94.33\%, which is higher than that of other parsers.  

As demonstrated in Wu et al.~\cite{Wu2016}, context enhancement is beneficial to the parsing accuracy of the arc-stand system. Here we show that it is also useful in the arc-eager system with stack LSTMs. Moreover, we report better performance by using dropout to mitigate overfitting and the dynamic oracle to decrease the sensitivity of error propagation. In short, it improves  0.49\% and 0.57\% in UAS and LAS in CTB5 comparing to those of the old system and maintains the same complexity $\mathcal{O}(n)$, and its scores are only lower than that of Cheng2016 in no-ranking frameworks. However, the complexity of Cheng2016 is higher than  $P_{CH}$.

By using the global scorer, the UASs of $P_{Base,Global}$ and $P_{CH,Global}$  increase by up to 0.82\% for PTB-YM and 1.44\% for CTB5, and the LASs of them increase by up to 0.79\% for PTB-YM and 1.32\% for CTB5. 
Thus, the increments of scores of PTB-YM are lower than that of CTB5, which is partly caused by the better scores of PTB-YM than those of CTB5. 
For $\beta$ in Eq. \ref{eq:Predict3}, we select $\beta \in [0,1]$ with the maximum UAS on the development dataset  for $P_{Base,Global}$ and $P_{CH,Global}$. Figure \ref{betaselection} demonstrates the UASs vary according to $\beta$,
\begin{figure}[t] 
\centering
\includegraphics[scale=0.65]{./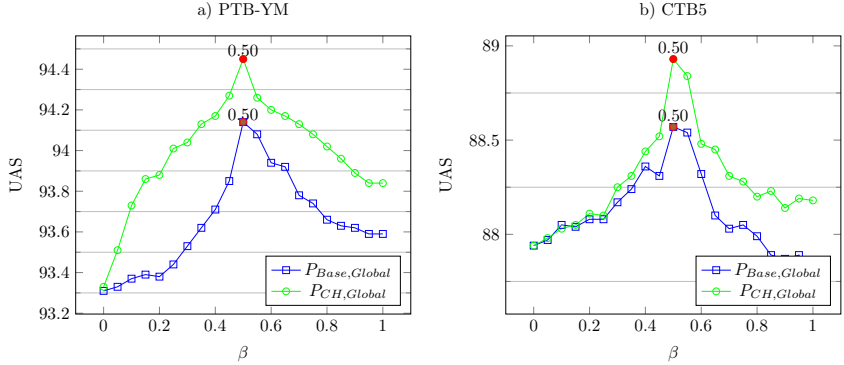}
\caption{The parameter  $\beta$ selection. \label{betaselection}}
\end{figure}
where the circle points are the selected points with the best UAS. When $\beta=0$, $P_{Base,Global}$ and $P_{CH,Global}$ degenerate to the underlying graph-based parser; when $\beta=1$, $P_{Base,Global}$ and $P_{CH,Global}$ actually are $P_{Base}$ and $P_{CH}$ without the global scorer. Thus, the parser with the global scorer can be smoothly transformed from a graph-based parser to a transition-based parser via  $\beta$. 
\subsection{Discussion}
\begin{table}
\begin{center}
\caption{ F1 score of binned distance for PTB-YM and CTB5 } 
{\footnotesize
\begin{tabular}{r | c c c c c c }
  \hline
\multirow{2}{*}{} &  \multicolumn{5}{c}{PTB-YM}     \\
                    &   ROOT    & 1     & 2         & 3-6   & 7-...   \\
  \hline
  $P_{Base}$ 	  	& 95.32		& 97.03	& 94.63	 	& 91.35 & 86.88	 	\\
  $P_{Base,Global}$	& 95.51		& 97.28 & 95.17 	& 92.49	& \textbf{89.12}(+2.23)	   \\
  $P_{CH}$ 	  	  	& 95.97		& 97.17	& 94.94		& 91.91 & 88.13	   \\
  $P_{CH,Global}$	& 96.30		& 97.45	& 95.50		& 93.01	& 89.62	   \\
   \hline
\multirow{2}{*}{} &  \multicolumn{5}{c}{CTB5}     \\
                    &   ROOT    & 1     & 2         & 3-6   & 7-...   \\
  \hline
  $P_{Base}$ 	  	& 81.82		& 95.82	& 89.11	 	& 87.18 & 83.86	 	\\
  $P_{Base,Global}$	& 82.29		& 96.28 & 90.21 	& 88.63	& 85.37	   \\
  $P_{CH}$ 	  	  	& 82.75		& 96.05	& 89.63		& 87.75 & 84.23	   \\
  $P_{CH,Global}$	& 82.25		& 96.39	& 90.42		& 88.79	& \textbf{85.37}(+1.51)	   \\
  \hline
\end{tabular}}
\label{tb.F1Scores}
\end{center}
\end{table}
Since the global scorer is based on the graph-based parser,  the parsers with it can be beneficial from graph-based parsing by searching the best dependency tree globally. In Table~\ref{tb.F1Scores} we show F1 scores of binned distance of a dependency arc between different system. Context enhancement increases F1 scores of binned distances in PTB-YM and CTB5. Moreover, The global scorer improves the F1 scores of each binned distance except ROOT, and the scores increase as much as 2.23\% for PTB-YM and  as much as 1.51\% for CTB5. However, the score of ROOT fluctuates, which may be caused by error propagation in the transition-based system while  the graph-based system tries to correct them. 

As described in Table~\ref{tb.results_parser}, the parser with context enhancement has the same complexity as the original parser, namely, $\mathcal{O}(n)$. On the other hand,  the complexities of $P_{Base,Global}$ and $P_{CH,Global}$ are higher than $\mathcal{O}(n^3)$ due to the  global scorer. Since the global scorer is implemented as a first-order graph-based decoder, it needs to evaluate $(n-1)^2$ scores between any pair of words in a sentence with length $n$ as depicted in Algorithm~\ref{alg:IntegratingParser}. After computing the scores, the integrating parser will call Algorithm~\ref{alg:ConstraintSpanOptimizing} to find the best action. Thus, the complexity of the decoding in the integrating parser is $2n\cdot \mathcal{O}(n^3)$. However, since the difference between a searching space and the space applied an action is comparatively small, a lazy updating strategy, which ignores the unchanged spans and the spans covered by a built arc, can be adopted to accelerate Algorithm~\ref{alg:IntegratingParser}. Compared with the calculation of the neural networks, the time required by the  global scorer,  maximum spanning tree (MST) searching,  is relatively short as stated by Cheng et al.~\cite{AttentionParsing}. In practice, we evaluate the time for $2n\cdot \mathcal{O}(n^3)$ decoding by running $2n$ times of the first-order decoding Algorithm~\ref{alg:ConstraintSpanOptimizing}, and the average time is \textbf{0.0027s} which is much smaller than 0.222s and 0.235s. The evaluation indicates that $P_{Base,Global}$ and $P_{CH,Global}$ can be accelerated by using more efficient implementation for the constraints. Moreover, Algorithm~\ref{alg:ConstraintSpanOptimizing} can be implemented via the ChuLiu-Edmonds algorithm in $\mathcal{O}(n^2)$~\cite{Tarjan1977}. Thus the complexity of the integrating parser will be smaller than $\mathcal{O}(n^3)$.

In Table~\ref{tb.results_parser}, the results demonstrate that the global scorer can further improve the parser using context enhancement. Thus, the scorer is independent of context enhancement, which indicates that the global scorer is an effective framework and can be integrated into other algorithms to improve their performance. 
Furthermore,  we ignore the dependency labels in the scorer. Therefore, the parsers $P_{Base,Global}$ and $P_{CH,Global}$ can be improved further by utilizing the better parser and the  dependency labels  in the  global scorer. 
\section{Related Works}
There are many pieces of research about integrating two different parsers.
Nivre and McDonald~\cite{Integrating2008} introduced two models: the guided graph-based model, which used the features extracting from the previous output of the transition-based model, and the guided transition-based model where features included the ones extracting from the previous output of the graph-based model. Besides, the models need a predicted tree as input and extract additional features from it by using feature templates. 
Zhang et al.~\cite{ZhangYue2008} proposed a beam-search-based parser, which based on the transition-based algorithm and combined both graph-based and transition-based parsing for training and decoding. They exploited the graph-based model to re-score the partial tree generated in transition-based processing. 
Thus, it is a parser with greedy searching strategy, which will suffer from error propagation.
Zhou et al.~\cite{Zhou2016} introduced a model exploiting  a dynamic action revising process to integrate search and learning, where a reranking model guides the  revising and select the candidates. Similar to the works of Nivre and McDonald~\cite{Integrating2008} and  Zhang et al.~\cite{ZhangYue2008}, the model also did not constrain the searching space of the processing sentence. Namely, it did not require the K-best candidate list.

Shen et al.~\cite{Shen2014} employed an edge-factored parser and a second-order sibling-factored parser to generate K-best candidate list. With the help of  complex subtree representation which captured global information in the tree, the reranking parser selected the best tree among the list efficiently.
Le and Zuidema~\cite{Le2014}  utilized a recursive neural network to make an infinite-order model, based on inside-outside recursive neural network, to rank dependency trees in a list of K-best candidates; 
Zhu et al.~\cite{zhu2015} built a recursive convolutional neural network with convolution and pooling layers to rank the K-best candidates, which abstracted the syntactic and semantic embeddings of phrases and words.

Our parsers are  based on a transition-based parser. Compared to Zhang et al.~\cite{ZhangYue2008}, Our parsers also integrated the graph-based model to find the best action at a step, but our model searched for the best future reward instead of re-scoring the built partial trees in their model. Similar to the parser constructed by Zhou et al.~\cite{Zhou2016}, our model also did not constrain the searching space. 
Context enhancement is firstly introduced in the work of Wu et al.~\cite{Wu2016}. The old parser in there is based on an arc-stand system without dropout supporting, and the effect of context enhancement is underestimated. Thus, here we re-implement context enhancement in an arc-eager system to use the  dynamic oracle and set the dropout rate to $0.1$. The results show that it provide the competitive scores while keeping the complexity $\mathcal{O}(n)$.

Besides, there are some works exploiting reinforcement learning. Zhang and Chan~\cite{LidanRL2009} formulate the parsing problem as the Markov Decision Process (MDP) and employ a Restricted Boltzmann Machine to get the rewards to alleviate local dependencies.
Compared with their reinforce learning framework, our reranking method accurately computes the future rewards based on the current state by using the global scorer. It is an alternative reranking method compared to the previous works.
It is worth to notice the work of Dozat and Manning~\cite{DozatM16} where they utilize general-purpose neural network components and train an attention mechanism over an LSTM, and they achieve large improvement. Since they also calculate the scores between a word and it potential head, we believe the performance can be further improved by directly replacing the underline parser of the global scorer with their model. Therefore, the parsers can be further improved with more accurate parsing algorithms. However, the complexity of the parsers is slightly higher than that of the first-order graph-based parsing.
\section{Conclusion}
In this paper, we implemented context enhancement on the arc-eager transition-base parser with stack LSTMs, the dynamic oracle and dropout supporting, and the results show that the parser is  competitive with previous state-of-the-art models. Besides, by considering the future reward taken an action, the global scorer re-scores the actions to improve the parsing accuracy further. With these improvements, the results demonstrate that UAS of the parser increases as much as 1.20\% for English and 1.66\% for Chinese, and LAS increases as much as 1.32\% for English and 1.63\% for Chinese. Especially, 
we get state-of-the-art LASs, achieving 87.58\% for Chinese and 93.37\% for English. The complexity is slightly higher than the first-order parsing, but the parser can be accelerated with the more efficient implementation. Moreover, we ignore the label of the base parser in the global scorer, which is beneficial to the accuracy. Thus, the future work will focus on different type of global scorer considering dependency label, which will be more efficient and precise than the model in here. 
%
%
%
\bibliographystyle{spmpsci}      
\bibliography{refs}   
\end{document}